\documentclass{article}

\usepackage[preprint]{neurips_2026}

\usepackage[utf8]{inputenc}
\usepackage[T1]{fontenc}    
\usepackage{hyperref}       
\usepackage{url}            
\usepackage{booktabs}       
\usepackage{amsfonts}       
\usepackage{nicefrac}       
\usepackage{microtype}      
\usepackage{xcolor}         
\usepackage{amsmath}
\usepackage{amssymb}
\usepackage{amsthm}
\usepackage{graphicx}
\usepackage{multirow}
\usepackage{multicol}

\newtheorem{theorem}{Theorem}
\newtheorem{lemma}{Lemma}

\title{Single-Query Black-Box Calibration Auditing via Logit Bias}

\author{
  Roman Plaud \\
  Institut Polytechnique de Paris \\
  Onepoint, France \\
  \And
  Antoine Saillenfest \\
  Onepoint, France \\
  \And
  Matthieu Labeau \\
  Institut Polytechnique de Paris \\
  \And
  Thomas Bonald \\
  Institut Polytechnique de Paris \\
  \And
  Willem Waegeman \\
  Ghent University \\
}

\begin{document}

\maketitle

\begin{abstract}
Evaluating the calibration of Large Language Models (LLMs) is critical for their safe deployment as zero-shot classifiers. Yet, commercial API providers increasingly hide the continuous output probabilities required by standard calibration metrics. To bypass this opacity, we demonstrate that any LLM API exposing a \texttt{logit\_bias} parameter can be mathematically manipulated to evaluate exact probability thresholds using strictly  one query per sample. Leveraging this mechanism, we introduce a novel and provably consistent estimator of the True Calibration Error for binary tasks. Our approach therefore provides an efficient framework for auditing black-box foundation models.
\end{abstract}

\section{Introduction}

%\ww{Your comment}

Standard calibration metrics for assessing the reliability of classifiers, such as the Expected Calibration Error (ECE)~\citep{naeini2015obtaining, guo_calib}, inherently require access to the model's continuous output probabilities. When Large Language Models (LLMs) are deployed as zero-shot or few-shot classifiers on benchmarks like MMLU~\citep{mmlu} or binary Question-Answering tasks~\citep{clark2019boolq}, classification is typically performed by extracting the raw probabilities of specific target tokens (e.g., "True" versus "False" or "A" versus "B") \cite{kadavath2022language}. However, for commercial, black-box models accessed via APIs, these continuous token probabilities are frequently restricted. API providers often hide output logits (\texttt{logprobs}) to protect proprietary architectures and prevent model distillation or imitation attacks \cite{carlini2024stealing}. This opacity severely limits independent auditing and practically prohibits calibration evaluation.

To bypass this limitation, we exploit the \texttt{logit\_bias} parameter, a feature exposed by some major APIs (such as OpenAI~\citep{openai_logit_bias}), originally intended to give users semantic control over generation, such as suppressing specific vocabulary or enforcing formatting. We demonstrate that this parameter can be mathematically manipulated to evaluate exact probability thresholds. Querying these probability thresholds allows us to compute the calibration error of an opaque model over a full dataset without ever extracting a continuous logit.

\textbf{Contributions.} In this paper, we introduce a novel empirical calibration estimator for black-box APIs equipped with a \texttt{logit\_bias} parameter. Our method strictly requires only one API query per sample. We rigorously decompose the estimator's bias and variance, proving its asymptotic consistency to the True Calibration Error (TCE). Our approach therefore provides an efficient, mathematically grounded framework for calibration auditing of black-box LLMs.

\section{Related Work}

Evaluating LLM calibration traditionally requires white-box access to exact output distributions. When native \texttt{logprobs} are hidden, researchers typically fall back on proxy methods that suffer from severe cost or accuracy limitations. Our single-query estimator directly resolves these deficiencies.

\textbf{Proxy Baselines: Verbalization and Sampling.} When exact probabilities are inaccessible, standard approaches estimate confidence through generated text or sampling behavior. The simplest zero-shot baseline prompts the LLM to explicitly verbalize its certainty \cite{lin2022teaching}, though this is notoriously sensitive to framing and prone to sycophancy \cite{xiong2023can}. While more sophisticated techniques—such as reasoning decomposition or multi-prompt aggregation—can improve verbalized calibration \cite{far_prompting}, they remain subjective. Alternatively, self-consistency sampling estimates confidence via the empirical frequency of the majority class across multiple generations \cite{wang2022self, kuhn2023semantic}. However, generating multiple responses incurs a prohibitive query cost for large-scale datasets.

\textbf{Exact Extraction via Logit Bias.} To bypass the unreliability of generative proxies, recent work introduces Iterative Logit Extraction. \citep{carlini2024stealing} demonstrated that continuous logits can be extracted from black-box models by iteratively manipulating the \texttt{logit\_bias} parameter and observing output changes. While successful, this model-stealing approach relies on a query-intensive binary search to recover arbitrary precision. Our work avoids multi-query continuous extraction; by recognizing that binned calibration only requires evaluating discrete bounds, we mathematically map ECE thresholds to predefined logit biases, reducing black-box calibration evaluation to a single API call per sample.

\section{Background: Traditional ECE Computation}

To contextualize the necessity of our black-box approach, we briefly review standard calibration metrics to highlight their fundamental reliance on exact continuous probabilities. For a binary dataset $S = \{(X_i, Y_i)\}_{i=1}^N$, let $f(X) \in (0,1)$ denote the model's exact continuous predicted probability for the positive class (e.g., the ``True'' token) and $Y \in \{0,1\}$ the true label. The standard Expected Calibration Error (ECE) \cite{naeini2015obtaining, guo_calib} approximates the True Calibration Error, $\text{TCE} = \mathbb{E}[|\mathbb{E}[Y|f(X)] - f(X)|]$, by partitioning these predictions into $M$ discrete bins. Letting $B_m$ denote the set of sample indices falling into the $m$-th bin, the empirical ECE is defined as:
\begin{equation}
\label{eq:_ece_bin}
    \widehat{\rm ECE}_{\rm bin} = \sum_{m=1}^M \frac{|B_m|}{N} \left| \text{acc}(B_m) - \text{conf}(B_m) \right|
\end{equation}
where the empirical accuracy is $\text{acc}(B_m) = \frac{1}{|B_m|} \sum_{i \in B_m} Y_i$ and the average confidence is $\text{conf}(B_m) = \frac{1}{|B_m|} \sum_{i \in B_m} f(X_i)$.

While smooth alternatives like Kernel Density Estimation~\cite{popordanoska2022consistent} exist, all metrics share the same limitation: they require knowing the continuous probability $f(X_i)$ for every sample. Because APIs hide these predictions, computing an $\widehat{\rm ECE}_{\rm bin}$ is impossible in practice. To prove that our proposed single-query estimator overcomes this opacity, our empirical evaluation (Section~\ref{sec:empirical_eval}) will simulate this restricted environment. This allows us to extract the hidden probabilities to compute an exact $\widehat{\rm ECE}_{\rm bin}$, which serves as the oracle against which our method and other baselines are benchmarked.

\section{The Single-Query Black-Box Estimator}
\label{sec:ece_blind}

In this section, we introduce $\widehat{\rm ECE}_{\rm blind}$, an estimator that computes calibration error using exactly one API query per sample. Our approach relies on a simple idea: by injecting a targeted \texttt{logit\_bias} to the binary output tokens, we can force a black-box API to evaluate threshold indicators of the form $\mathbf{1}(f(X_i) \ge t_i)$. We then partition the dataset and use these indicators to construct a consistent estimator of the True Calibration Error (TCE) without ever extracting the underlying probabilities.

\textbf{Single-Query Threshold Evaluation.} 
Let $z_1$ and $z_0$ denote the model's raw logits corresponding to the exact positive and negative target tokens (e.g., the specific token IDs for ``True'' and ``False''). By restricting the decision to these two outcomes\footnote{In practice, this includes semantically equivalent token variants (e.g., \texttt{" True"}, \texttt{"TRUE"}). For clarity of exposition, we derive the mechanism here for a single pair of tokens. The full derivation, proving that this thresholding mechanism holds perfectly across sets of multiple token variants, is provided in Appendix~\ref{app:methodology}.} and ignoring the rest of the vocabulary, the softmax probability mathematically simplifies to a sigmoid over their difference: $f(X) = \sigma(z_1 - z_0)$. 

Testing whether a sample's confidence $f(X_i)$ exceeds a threshold $t_i \in (0,1)$ is equivalent to bounding this logit gap:
\begin{equation}
    f(X_i) \ge t_i \iff z_1 - z_0 \ge \ln\left(\frac{t_i}{1-t_i}\right) \,.
\end{equation}
To evaluate this without white-box access, we define a threshold shift $b = -\ln(t_i / (1-t_i))$ and apply it to the positive token via the API's \texttt{logit\_bias} parameter. Additionally, to prevent the model from outputting synonymous but invalid tokens like ``Yes'' or `` Correct'', we apply a massive constant bias $C$ (e.g., $C=50$) to both the positive and negative targets to effectively suppress all other token probabilities to zero.

By setting the API temperature to $0.0$, the outputted token evaluates the inequality $z_1 + C + b \ge z_0 + C$. Since the constant $C$ cancels out, this condition simplifies to $z_1 - z_0 \ge -b$. Consequently, if the API outputs the positive token, our indicator $\mathbf{1}(f(X_i) \ge t_i)$ evaluates to 1; otherwise, it is 0. This recovers the exact threshold in a single query.

\textbf{Partition Scheme and Estimator.} 
Standard binning evaluates whether $f(X_i)$ falls within a bin $B_m = [t_m, t_{m+1})$, which translates to the difference of two indicators: ${\mathbf{1}(f(X_i) \ge t_m) - \mathbf{1}(f(X_i) \ge t_{m+1})}$. Because our budget allows only one query per sample, we cannot check both the upper and lower bounds for a single input.

We resolve this by randomly partitioning the dataset $S$ into $M$ strictly disjoint subsets $S_1, \dots, S_M$, each containing exactly $N_m = \lfloor \frac{N}{M} \rfloor$ independent samples. We replace the unknown continuous probability $f(X_i)$ with the constant bin midpoint $c_m = \frac{t_m + t_{m+1}}{2}$, and estimate the local empirical gap, which directly approximates the $m$-th binning term $\frac{|B_m|}{N}\left(\text{acc}(B_m) - \text{conf}(B_m)\right)$ from Equation~\ref{eq:_ece_bin}, by evaluating the upper and lower threshold indicators on adjacent subsets:
\begin{equation*}
    \hat{\Delta}_m^{LC} = \frac{1}{N_m}\sum_{(X_i,Y_i) \in S_m} \!\!(c_m - Y_i)\mathbf{1}(f(X_i) \ge t_m) - \frac{1}{N_m}\sum_{(X_j,Y_j) \in S_{m+1}} \!\!(c_m - Y_j)\mathbf{1}(f(X_j) \ge t_{m+1})
\end{equation*}
The final estimator aggregates these local gaps over all $M$ bins: $\widehat{\rm ECE}_{\rm blind} = \sum_{m=1}^M |\hat{\Delta}_m^{LC}|$

\textbf{Theoretical Guarantees.} We evaluate the consistency of our estimator against the True Calibration Error (TCE), defined as $\text{TCE} = \mathbb{E}[|\mathbb{E}[Y|f(X)] - f(X)|]$.

\begin{theorem}[Estimator Consistency]
\label{th:ece_blind}
Let the true calibration function $p \to \mathbb{E}[Y|f(X)=p]$ be $L$-Lipschitz. For a dataset of size $N$ partitioned into $M$ disjoint subsets, the Mean Squared Error (MSE) of $\widehat{\rm ECE}_{\rm blind}$ with respect to the True Calibration Error (TCE) is strictly bounded by:
\begin{equation}
    \mathbb{E}\left[\left(\widehat{\rm ECE}_{\rm blind} - \text{TCE}\right)^2\right] \le \mathcal{O}\left(\frac{M^3}{N} + \frac{1}{M^2}\right)
\end{equation}
Consequently, if the number of bins scales with the dataset size as $M \propto N^\alpha$ for any scaling exponent $0 < \alpha < \frac{1}{3}$, the MSE strictly vanishes as $N \to \infty$:
\begin{equation}
    \lim_{N \to \infty} \mathbb{E}\left[\left(\widehat{\rm ECE}_{\rm blind} - \text{TCE}\right)^2\right] = 0
\end{equation}
\end{theorem}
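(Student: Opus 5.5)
We split the error into a bias part and a variance part through the deterministic quantity $D_M = \sum_{m=1}^M |\Delta_m|$, where $\Delta_m = \mathbb{E}[\hat{\Delta}_m^{LC}]$. By $(a+b)^2 \le 2a^2+2b^2$,
\begin{equation*}
\mathbb{E}\big[(\widehat{\rm ECE}_{\rm blind}-\text{TCE})^2\big] \le 2\,\mathbb{E}\big[(\widehat{\rm ECE}_{\rm blind}-D_M)^2\big] + 2\,(D_M-\text{TCE})^2 .
\end{equation*}
We bound the first term by $\mathcal{O}(M^3/N)$ and the second by $\mathcal{O}(1/M^2)$. Throughout we take equal-width bins $t_m=(m-1)/M$, with $t_{M+1}=1$ and the convention $\mathbf{1}(f\ge 1)=0$, so that the last bin needs no extra subset. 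Each $S_m$ is an i.i.d. sample of size $N_m \asymp N/M$.

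\textbf{Identifying the mean.} Samples in $S_m$ and $S_{m+1}$ are identically distributed, so
\begin{equation*}
\Delta_m = \mathbb{E}\big[(c_m - Y)\,\mathbf{1}(t_m\le f(X)<t_{m+1})\big] = \mathbb{E}\big[(c_m - g(f(X)))\,\mathbf{1}(f(X)\in B_m)\big],
\end{equation*}
where $g(p)=\mathbb{E}[Y\mid f(X)=p]$. On $B_m$ we have $|c_m - f(X)|\le 1/(2M)$. Replacing $c_m$ by $f(X)$ inside the integrand therefore costs at most $P(B_m)/(2M)$. Set $h = g\circ f - f$. Since $g$ is $L$-Lipschitz, $h$ restricted to $B_m$ has oscillation at most $(L+1)/M$. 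Hence
\begin{equation*}
\Big|\,\big|\mathbb{E}[h\,\mathbf{1}_{B_m}]\big| - \mathbb{E}\big[|h|\,\mathbf{1}_{B_m}\big]\Big| \le \frac{(L+1)\,P(B_m)}{M}.
\end{equation*}
The reason is that if $h$ changes sign on $B_m$, then $|h|\le (L+1)/M$ there. Summing over $m$ and using the reverse triangle inequality gives $|D_M - \text{TCE}| \le (L+3/2)/M$, so the squared bias is $\mathcal{O}(1/M^2)$.

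\textbf{Variance.} The map $x\mapsto\sum_m|x_m|$ is 1-Lipschitz in $\ell_1$. So $|\widehat{\rm ECE}_{\rm blind}-D_M| \le \sum_m |\hat{\Delta}_m^{LC}-\Delta_m|$, and by Cauchy--Schwarz
\begin{equation*}
\mathbb{E}\big[(\widehat{\rm ECE}_{\rm blind}-D_M)^2\big] \le M\sum_{m=1}^M \operatorname{Var}\big(\hat{\Delta}_m^{LC}\big).
\end{equation*}
Each $\hat{\Delta}_m^{LC}$ is a difference of two empirical means of i.i.d. terms bounded by 1. Each mean has variance at most $1/N_m = \mathcal{O}(M/N)$, so $\operatorname{Var}(\hat{\Delta}_m^{LC}) \le 2/N_m$. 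The variance term is thus $\mathcal{O}(M\cdot M\cdot M/N)=\mathcal{O}(M^3/N)$.

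We do not need the two sums within $\hat{\Delta}_m^{LC}$ to be independent, since a crude bound suffices. Note that $\hat{\Delta}_m^{LC}$ and $\hat{\Delta}_{m+1}^{LC}$ share the subset $S_{m+1}$, so the $\hat{\Delta}_m^{LC}$ are correlated across $m$; this is exactly why we use Cauchy--Schwarz rather than trying to add variances. If the random partition is made with $N_m$ drawn without replacement from a fixed dataset, the within-subset samples are still i.i.d. draws conditional on their indices, so nothing changes.

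\textbf{Conclusion.} Combining the two bounds gives $\mathcal{O}(M^3/N + 1/M^2)$. With $M\propto N^\alpha$ the bound is $\mathcal{O}(N^{3\alpha-1}+N^{-2\alpha})$, which vanishes for $0<\alpha<1/3$.

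The main obstacle is the bias step. The absolute value of a bin-averaged gap only approximates the bin-integrated absolute gap because of the Lipschitz oscillation bound, and we must also account for the midpoint substitution. Both are handled through the sign-change argument above, which is where the Lipschitz hypothesis enters. The variance step is deliberately loose, and that looseness is what produces the $M^3$ rather than $M^2$.
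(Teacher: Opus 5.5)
Your proof is correct, but it is organized differently from the paper's. The paper uses the identity $\mathrm{MSE}=\mathrm{Bias}^2+\mathrm{Var}$ around $\mathbb{E}[\widehat{\rm ECE}_{\rm blind}]$ and splits the bias into three pieces:
\begin{itemize}
\item $\mathcal{B}_{stat}$, controlled by $\mathbb{E}|Z|-|\mathbb{E}Z|\le\sqrt{\operatorname{Var}Z}$;
\item $\mathcal{B}_{trap}$, the midpoint substitution;
\item $\mathcal{B}_{bin}$, imported from Theorem 3 of Futami et al.
\end{itemize}
It then bounds the variance separately by $8M^2/N$ via Efron--Stein with a bounded-differences argument.

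You instead center at the deterministic $D_M=\sum_m|\Delta_m|$ and use the $\ell_1$-Lipschitzness of $x\mapsto\sum_m|x_m|$ together with Cauchy--Schwarz. This treats the paper's $\mathcal{B}_{stat}$ and its variance term in a single step. It needs neither Efron--Stein nor any independence across bins, so the correlation between neighbouring gaps sharing $S_{m+1}$ is harmless. You also prove the binning bias directly through the oscillation/sign-change argument rather than citing it. The price is a constant $(L+1)/M$ instead of the cited $(1+L)/(2M)$, which is irrelevant for the $\mathcal{O}(\cdot)$ statement. Your convention $\mathbf{1}(f\ge 1)=0$ also cleanly resolves the reference to a nonexistent $S_{M+1}$ in the last gap, which the paper leaves implicit.

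What the paper's split gives in exchange is finer information: the estimator's fluctuation around its own mean is only $\mathcal{O}(M^2/N)$. So the $M^3/N$ term comes entirely from the upward Jensen bias of the absolute value, a distinction your merged bound cannot make. This matches your own remark that the $M^3$ arises from looseness. Neither route does better than $M^3/N$; both reach the same rate and the same range $0<\alpha<1/3$.

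One small inconsistency: you say you do not need independence of the two sums inside $\hat\Delta_m^{LC}$. Without it, the crude bound is $\operatorname{Var}(\hat\Delta_m^{LC})\le 4/N_m$, not $2/N_m$. The latter uses independence of the disjoint subsets, which does hold here since the data are i.i.d. and the partition is independent of them. Either way the rate is unchanged.
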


This theorem guarantees that $\widehat{\rm ECE}_{\rm blind}$ is a provably consistent estimator of the TCE. The proof (detailed in Appendix~\ref{app:proofs}) establishes this by decomposing the MSE into a variance component and three distinct bias terms, all of which vanish under the required $N^\alpha$ scaling law. Proof is inspired from \citep{futami2024information} who performed similar bounding and decomposition for the standard $\widehat{\rm ECE}_{\rm bin}$ estimator.

\textbf{Optimal Bin Scaling} By minimizing the theoretical upper bound of the MSE with respect to $M$, we show that the optimal bin count should scale as $M \propto N^{1/5}$. (See Appendix~\ref{app:optimal_bins} for the derivation.)

\section{Empirical Evaluation}
\label{sec:empirical_eval}

\textbf{Experimental Setup.} We evaluate $\widehat{\rm ECE}_{\rm blind}$ on BoolQ \citep{clark2019boolq} and a binarized MMLU \citep{mmlu}, created by splitting each 4-way question into four independent Yes/No questions. To establish an exact white-box ground truth from continuous probabilities, we simulate opaque APIs using four open-weight models: Qwen-2.5-7B-Instruct \citep{qwen2025qwen25technicalreport}, Llama-3.1-8B-Instruct \citep{grattafiori2024llama3herdmodels}, Mistral-7B-Instruct-v0.3 \citep{jiang2023mistral7b}, and Gemma-2-9B-IT \citep{gemmateam2024gemma2improvingopen}.

\textbf{Baselines.} We compare our estimator against Verbalized Confidence ($K=1$), Monte Carlo Sampling ($K \in \{1..8\}$, $T=1.0$), and Iterative Logit Extraction \citep{carlini2024stealing} ($K \in \{1..8\}$) (Baselines implementation are detailed in Appendix~\ref{app:methodology}). To prevent vocabulary bleeding and ensure a fair comparison, all logit-based baselines apply the same $C=50$ bias (Section~\ref{sec:ece_blind}).

We evaluate the Mean Absolute Error between each proxy estimator and the ground truth $\widehat{\rm ECE}_{\rm bin}$. 

\begin{figure}[htbp]
    \centering
    % Added [t] for top alignment
    \begin{minipage}[t]{0.48\textwidth}
        \vspace{0pt} % Sets the anchor exactly at the top
        \centering
        \includegraphics[width=\linewidth]{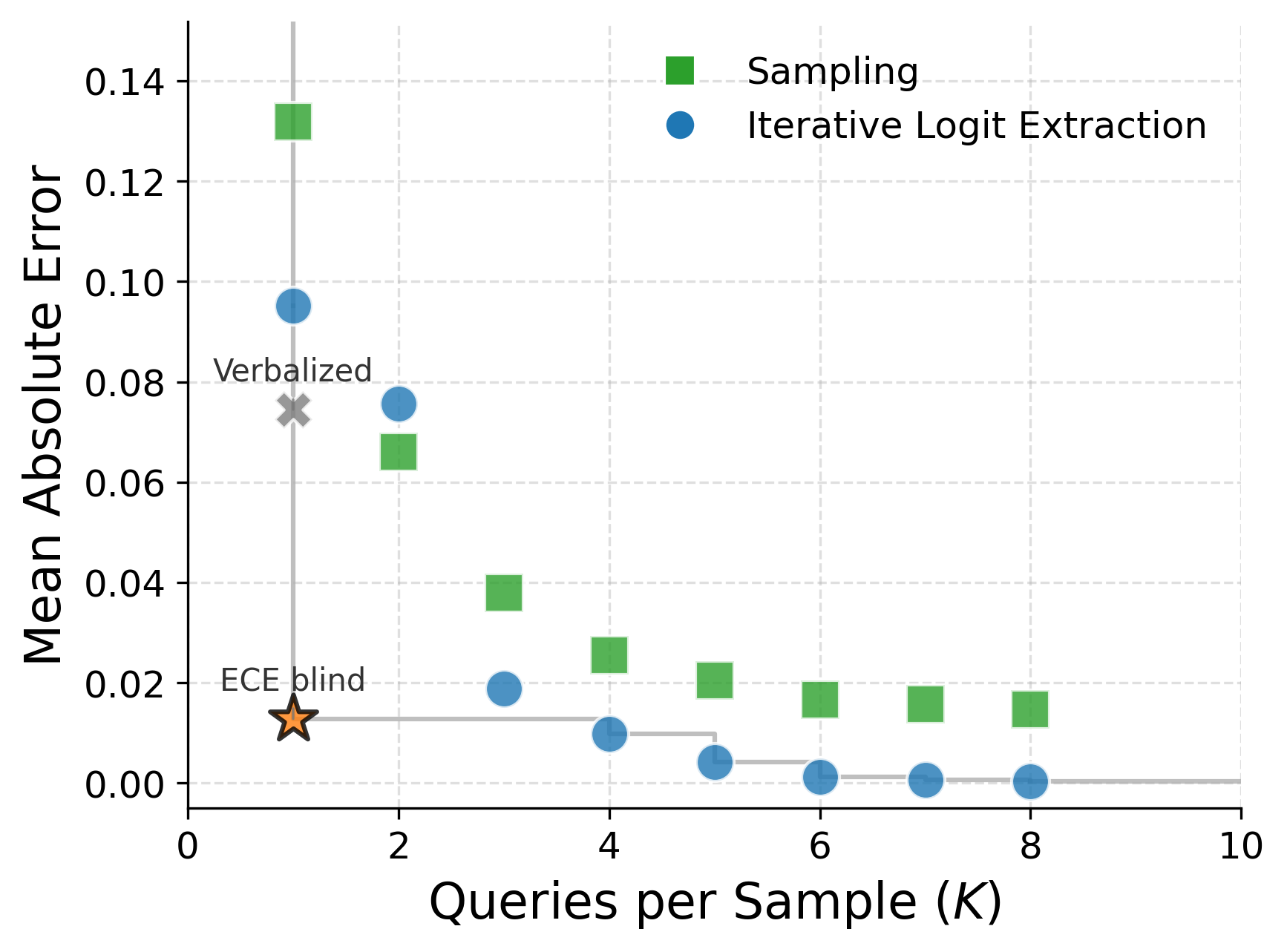}
        \caption{\textbf{Cost-error Pareto frontier.} Average MAE against the white-box oracle across four models and two datasets. $\widehat{\rm ECE}_{\rm blind}$ establishes the optimal trade-off, outperforming sampling and matching $K=4$ Iterative Logit Extraction.}
        \label{fig:pareto_front}
    \end{minipage}\hfill
    % Added [t] for top alignment
    \begin{minipage}[t]{0.48\textwidth}
        \vspace{0pt} % Sets the anchor exactly at the top
        \centering
        \includegraphics[width=\linewidth]{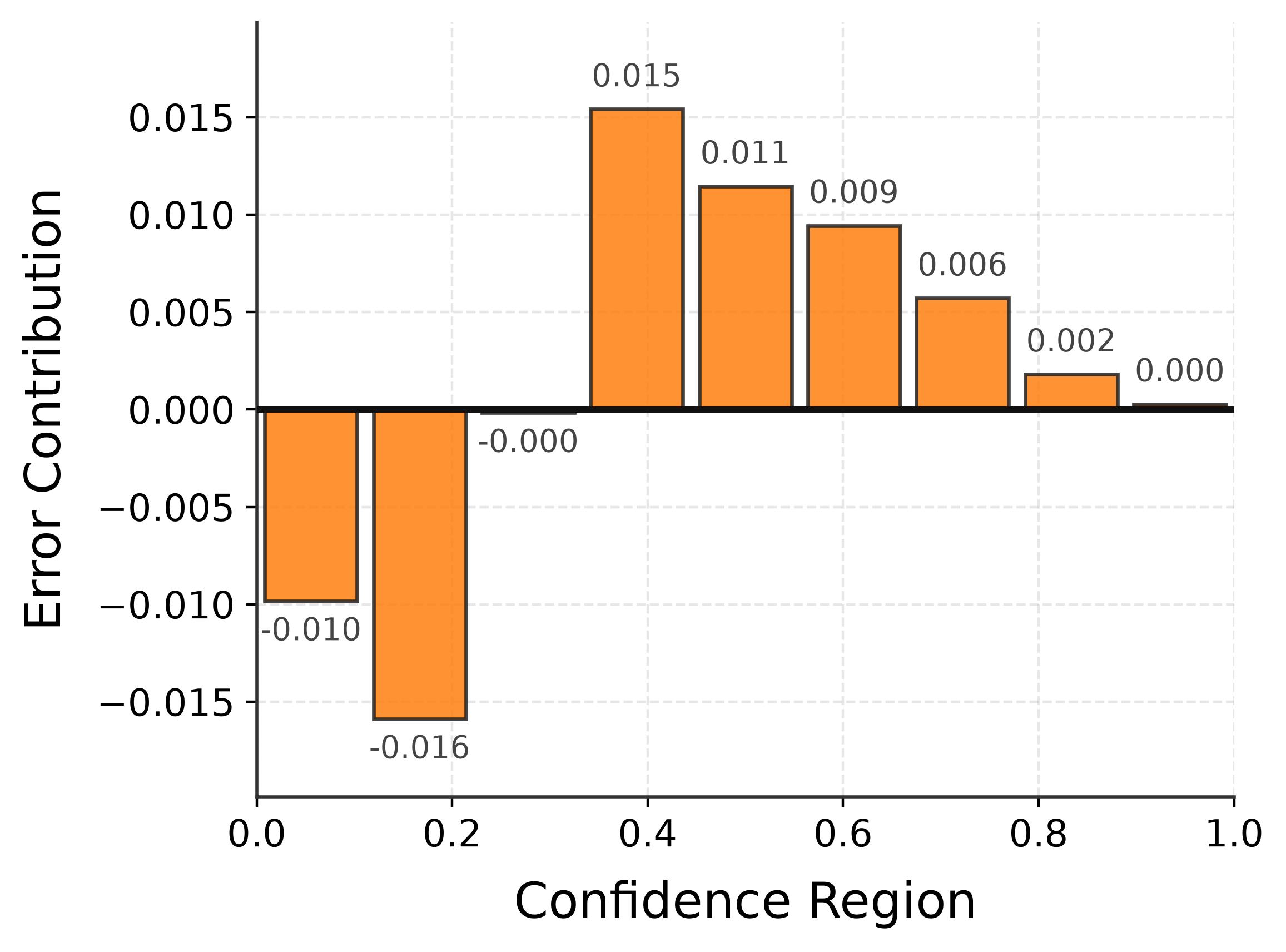}
        \caption{\textbf{ECE Contribution Curve.} Llama-3.1-8B evaluated on MMLU. It plots the marginal density-weighted contribution ($\hat{\Delta}_m^{LC}$) of each threshold interval to the total error. Positive values indicate overconfidence.}
        \label{fig:delta_curve}
    \end{minipage}
\end{figure}

\textbf{Cost-Error Pareto Frontier.} As shown in Figure~\ref{fig:pareto_front}, with a single-query budget, $\widehat{\rm ECE}_{\rm blind}$ outperforms Verbalized Confidence, reducing the average MAE from $>0.07$ to near $0.01$. Sampling is inefficient for calibration; even with $K=8$ queries per sample, it plateaus at an MAE of $0.02$. Iterative logit extraction \citep{carlini2024stealing} provides accurate continuous probabilities, and our 1-query estimator matches its performance at $K=5$. While iterative logit extraction slightly surpasses our estimator at $K \ge 6$, it requires at least 6x the query cost. Therefore, $\widehat{\rm ECE}_{\rm blind}$ defines a competitive Pareto frontier for cost-efficient calibration auditing.

\textbf{Diagnostic Interpretability.} Beyond producing a single ECE score, $\widehat{\rm ECE}_{\rm blind}$ provides  a visual diagnostic without requiring probabilities. Figure~\ref{fig:delta_curve} shows the \textit{Blind Calibration Curve} for Llama-3.1-8B on MMLU (extended curves are in Appendix~\ref{app:contribution_curves}). Instead of plotting accuracy against confidence, this curve plots the signed local gap ($\hat{\Delta}_m^{LC}$) per threshold interval. This visualizes the \textit{net} density-weighted miscalibration: positive values indicate overconfidence, and negative values indicate underconfidence. For example, Figure~\ref{fig:delta_curve} shows Llama-3.1-8B is underconfident at lower probabilities and overconfident in higher regions. In standard reliability diagrams, large visual gaps might represent small fractions of the dataset. In contrast, the absolute sum of our bars equals the final $\widehat{\rm ECE}_{\rm blind}$ score. This helps practitioners isolate which probability regions actually degrade model reliability.

\section{Conclusion}
Evaluating the calibration of opaque LLMs is challenging when API providers restrict continuous probabilities. To solve this, we introduced $\widehat{\rm ECE}_{\rm blind}$, an estimator that leverages the \texttt{logit\_bias} parameter to evaluate exact probability thresholds using strictly one query per sample. This eliminates the prohibitive costs of multi-query extraction or sampling and the inaccuracies of verbalizing methods. We proved the estimator's asymptotic consistency with the True Calibration Error and empirically demonstrated that it matches the accuracy of expensive baselines at a fraction of the cost. Ultimately, $\widehat{\rm ECE}_{\rm blind}$ provides the research community with a mathematically rigorous, highly economical framework for the independent auditing of commercial foundation models.

\newpage
\bibliographystyle{plain}
\bibliography{bib} 

\newpage
\appendix

\section{Limitations and Future Work}
\label{app:limitations}

While $\widehat{\rm ECE}_{\rm blind}$ provides a provably consistent and highly query-efficient framework for black-box calibration auditing, our methodology is subject to several theoretical and practical limitations:

\begin{itemize}
    \item \textbf{Dependence on API Infrastructure:} The primary limitation of our methodology is its reliance on commercial API providers exposing a \texttt{logit\_bias} parameter. As \citep{carlini2024stealing} demonstrate, this parameter enables model-stealing attacks via iterative log-probability extraction. Consequently, providers are restricting its use. OpenAI supports it for standard models but disables it for newer reasoning architectures (e.g., the \texttt{o1} series). Anthropic recently removed this parameter from their API, and Google's Gemini API ignores it. If the industry fully deprecates this feature, exact black-box calibration auditing via thresholding will become impossible.

    \item \textbf{Restriction to Binary Classification Tasks:} Our technique relies on a binary choice where the probability simplifies to the difference between two token logits. In multi-class tasks involving several tokens (e.g., ``A'', ``B'', ``C'', ``D''), the probability of any single token depends on all the others. Consequently, evaluating a probability threshold for even the top label is not straightforward using a single \texttt{logit\_bias} value. Furthermore, evaluating multi-class calibration requires choosing among several ECE definitions (such as top-label or marginal ECE). Developing a methodology to map these multi-class metrics to single API queries remains an open problem.
    
    \item \textbf{Optimal Allocation of Fixed Budgets:} Our estimator is designed estimation calibration under a budget of exactly one query per sample. If an auditor has a larger budget (e.g., $K=3$ or $K=5$), our formulation does not dictate how to optimally allocate these additional queries. Future work must determine whether a larger budget is better spent evaluating multiple thresholds per sample or querying independent subsets.
    
    \item \textbf{Token Probability vs. Confidence:} Our method measures the calibration of the model's next-token predictive distribution over specific target words. However, as is common in zero-shot LLM evaluation, the raw probability mass a model assigns to the token ``True'' may not fully capture its confidence in the underlying factual claim \cite{kadavath2022language, kuhn2023semantic}.
    
    \item \textbf{Hidden Probability Mass and Clamping Effects:} We apply a large constant bias ($C=50$) to force the model to select between specified target tokens. If the model favors an unpredicted but valid synonym (e.g., ``Correct'' or ``Yes''), that probability mass is suppressed. Note that this clamping effect applies to all evaluated logit-based estimators, including the white-box oracle.

    \item \textbf{Simulated API Environments:} To compute the exact white-box ground truth required for our empirical evaluation, we simulated black-box constraints using open-weights models rather than querying live commercial endpoints. While mathematically equivalent to a live API exposing a \texttt{logit\_bias} parameter, real-world production APIs often include undocumented prompt formatting, safety filters, or dynamic model routing that could potentially interfere with precise logit manipulations.
    
    \item \textbf{The ``Oracle'' is not a True Oracle:} In our empirical evaluation, we treat the white-box continuous binned ECE ($ECE_{oracle}$) as the ground truth. However, this oracle is an empirical estimate of the True Calibration Error (TCE) computed over a finite dataset, meaning it remains subject to standard finite-sample and discrete binning biases.

    \item \textbf{Requirement for Large Datasets:} To optimally balance statistical subset noise against discretization bias, our theoretical framework requires the number of bins to scale as $M \propto N^{1/5}$. Consequently, the method depends on a sufficiently large dataset to achieve a reasonable resolution. For example, evaluating the standard BoolQ validation split ($N=3,270$ examples) strictly limits the optimal bin count to $M=5$. This inevitably restricts the granularity of the calibration evaluation, making the estimator impractical for small benchmarks.
    
\end{itemize}

\newpage
\section{Detailed Proofs for Estimator Bounds and Consistency}
\label{app:proofs}

In this section, we provide the complete mathematical proofs for the bounds on the bias and variance of the single-query blind estimator $\widehat{\rm ECE}_{\rm blind}$, establishing its consistency. 

Let $S = (X_i, Y_i)_{i=1}^N$ represent a dataset of $N$ independent observations. The continuous probability space $[0,1]$ is partitioned into $M$ equal-width bins $B_m = [t_m, t_{m+1})$, with respective constant midpoints $c_m = \frac{t_m + t_{m+1}}{2}$. The dataset is randomly fractured into $M$ strictly disjoint subsets $S_1, \dots, S_M$, each containing exactly $N_m = \lfloor \frac{N}{M}\rfloor$ or $N_m+1$ samples.

For a specific sample $Z=(X,Y)$, threshold $t$, and midpoint $c_m$, we define the threshold observation function:
\begin{equation}
    W(Z, t, c_m) = (c_m - Y)\mathbf{1}(f(X) \ge t)
\end{equation}

The local empirical gap is defined as:
\begin{equation}
    \hat{\Delta}_m^{LC} = \frac{1}{N_m}\sum_{i \in S_m} W(Z_i, t_m, c_m) - \frac{1}{N_m}\sum_{j \in S_{m+1}} W(Z_j, t_{m+1}, c_m)
\end{equation}

\subsection{Expected Value and Variance of the Local Empirical Gap}

\begin{lemma}[Expected Value of $\hat{\Delta}_m^{LC}$]
\label{lem:lem1}
Let $\Delta_m^{Trap} = \mathbb{E}[(c_m - Y)\mathbf{1}(f(X) \in B_m)]$. Then $\mathbb{E}[\hat{\Delta}_m^{LC}] = \Delta_m^{Trap}$.
\end{lemma}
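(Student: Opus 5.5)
We compute the expectation directly from linearity and the i.i.d. assumption. Throughout, the samples $Z_i=(X_i,Y_i)$ are drawn i.i.d. from the data distribution. The random partition into $S_1,\dots,S_M$ is independent of the data. Hence, conditional on the partition, every sample in $S_m$ and every sample in $S_{m+1}$ has the same law as a generic $Z=(X,Y)$.

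\textbf{Step 1: expectation of each average.} By linearity, the first average in $\hat{\Delta}_m^{LC}$ has expectation
\[
\frac{1}{N_m}\sum_{i\in S_m}\mathbb{E}[W(Z_i,t_m,c_m)] = \frac{|S_m|}{N_m}\,\mathbb{E}[(c_m-Y)\mathbf{1}(f(X)\ge t_m)].
\]
The second average is handled the same way, with $t_{m+1}$ in place of $t_m$. When $|S_m|=|S_{m+1}|=N_m$, the prefactors equal one. This is the equal-size convention used in the main text, where each subset has exactly $N_m=\lfloor N/M\rfloor$ samples and any leftover samples are discarded. If the appendix's ``$N_m$ or $N_m+1$'' sizes are used instead, I would normalize each sum by its own subset size. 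The lemma is meant in that sense. The cross-subset structure plays no role in this step: only the marginal law of each $Z_i$ matters, so independence between $S_m$ and $S_{m+1}$ is not needed here. It will matter only for the variance.

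\textbf{Step 2: telescoping the indicators.} Subtracting the two expectations gives
\[
\mathbb{E}[\hat{\Delta}_m^{LC}] = \mathbb{E}\big[(c_m-Y)\big(\mathbf{1}(f(X)\ge t_m)-\mathbf{1}(f(X)\ge t_{m+1})\big)\big].
\]
Since $t_m<t_{m+1}$, we have $\{f(X)\ge t_{m+1}\}\subseteq\{f(X)\ge t_m\}$. The difference of indicators is therefore exactly $\mathbf{1}(t_m\le f(X)<t_{m+1})=\mathbf{1}(f(X)\in B_m)$. This yields $\Delta_m^{Trap}$.

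\textbf{Step 3: the last bin.} For $m=M$, the index $S_{M+1}$ and threshold $t_{M+1}=1$ must be interpreted. I would adopt the convention that the upper term vanishes, which holds because $f(X)<1$ almost surely so $\mathbf{1}(f(X)\ge 1)=0$. Equivalently, one can take $S_{M+1}=S_1$ cyclically; the term is then still zero in expectation. Either way the identity holds for the last bin, whose effective bin is $[t_M,1)$.

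\textbf{Main obstacle.} There is no analytic difficulty. The only care needed is bookkeeping: matching the normalization $1/N_m$ to the actual subset sizes, and handling the boundary bin. I would state these conventions explicitly at the start of the proof.
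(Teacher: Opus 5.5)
Your proof is correct and follows the paper's argument: linearity of expectation under i.i.d.\ sampling reduces $\mathbb{E}[\hat{\Delta}_m^{LC}]$ to a difference of two expectations, and the nesting $\{f(X)\ge t_{m+1}\}\subseteq\{f(X)\ge t_m\}$ turns the difference of indicators into $\mathbf{1}(f(X)\in B_m)$. Your bookkeeping also closes two gaps the paper leaves open: with the appendix's subset sizes ($N_m$ or $N_m+1$) and a fixed $1/N_m$ normalization, the identity holds only up to a factor $(N_m+1)/N_m$ on the larger subsets, and the paper never says how $S_{M+1}$ and $t_{M+1}=1$ are handled in the last bin.
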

\begin{proof}
By the linearity of expectation and i.i.d assumption:
\begin{equation*}
    \mathbb{E}[\hat{\Delta}_m^{LC}] = \mathbb{E}[(c_m - Y)\mathbf{1}(f(X) \ge t_m)] - \mathbb{E}[(c_m - Y)\mathbf{1}(f(X) \ge t_{m+1})]
\end{equation*}
Factoring out $(c_m - Y)$, the difference of the two indicator functions evaluates to $1$ strictly when $f(X)$ falls between $t_m$ and $t_{m+1}$. Thus:
\begin{equation*}
    \mathbb{E}[\hat{\Delta}_m^{LC}] = \mathbb{E}[(c_m - Y)(\mathbf{1}(f(X) \ge t_m) - \mathbf{1}(f(X) \ge t_{m+1}))] = \mathbb{E}[(c_m - Y)\mathbf{1}(f(X) \in B_m)] = \Delta_m^{Trap}
\end{equation*}
\end{proof}

\begin{lemma}[Variance of $\hat{\Delta}_m^{LC}$]
The variance of the local empirical gap is bounded such that $Var(\hat{\Delta}_m^{LC}) \le \frac{2M}{N}$.
\end{lemma}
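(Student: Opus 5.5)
The plan is to use the independence of the two subsets $S_m$ and $S_{m+1}$: they are disjoint and the samples are i.i.d., so the two empirical averages in $\hat{\Delta}_m^{LC}$ are independent. Hence the variance of their difference is the sum of their variances:
\begin{equation*}
    Var(\hat{\Delta}_m^{LC}) = \frac{1}{N_m} Var\big(W(Z, t_m, c_m)\big) + \frac{1}{N_m} Var\big(W(Z, t_{m+1}, c_m)\big),
\end{equation*}
where $Z=(X,Y)$ is a generic sample. The factor $1/N_m$ comes from averaging $N_m$ i.i.d. copies within each subset.

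Next I bound each single-sample variance by a constant. Since $c_m \in [0,1]$ and $Y \in \{0,1\}$, we have $|c_m - Y| \le 1$. The indicator lies in $\{0,1\}$, so $|W(Z,t,c_m)| \le 1$ for every threshold $t$. Therefore $Var(W) \le \mathbb{E}[W^2] \le 1$, and it follows that $Var(\hat{\Delta}_m^{LC}) \le 2/N_m$.

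It remains to convert $N_m$ into $N/M$, and this is the only delicate step. With $N_m = \lfloor N/M \rfloor$ we get $N_m \ge N/M - 1$, not $N/M$, so the bound $2/N_m \le 2M/N$ is not literally true when $M \nmid N$. I would handle this in one of two ways:
\begin{itemize}
    \item Assume $M$ divides $N$, or interpret $N_m$ as $N/M$ up to rounding. Then $2/N_m = 2M/N$ exactly.
    \item Sharpen the constant. Because $|c_m - Y| \le \max(c_m, 1-c_m) \le 1$ and $\mathbf{1}(\cdot)$ is Bernoulli, one could try $Var(W) \le 1/4 + \ldots$, but this is messy. Simpler is the bound $N_m \ge N/(2M)$ whenever $N \ge 2M$, which gives $Var \le 4M/N$, still $\mathcal{O}(M/N)$.
\end{itemize}
Any such $\mathcal{O}(M/N)$ bound suffices for Theorem~\ref{th:ece_blind}, so I would state the result with $N_m = N/M$ and note that the rounding only changes the constant.
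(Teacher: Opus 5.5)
Your proof is the paper's own argument. Independence of the disjoint subsets $S_m$ and $S_{m+1}$ turns the variance into a sum, and $|W|\le 1$ gives $Var(W)\le 1$; the paper then simply substitutes $N_m = N/M$, silently ignoring the floor you correctly flag. Your ``messy'' sharpening is actually clean and removes that gap: $W\in\{c_m-1,\,0,\,c_m\}\subset[c_m-1,c_m]$, an interval of length $1$, so $Var(W)\le 1/4$, and with $\lfloor x\rfloor\ge x/2$ for $x\ge 1$ this gives $Var(\hat{\Delta}_m^{LC})\le \frac{1}{2N_m}\le \frac{M}{N}\le \frac{2M}{N}$ exactly as stated, without assuming $M\mid N$.
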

\begin{proof}
Because $S_m$ and $S_{m+1}$ are completely disjoint, their sample means are strictly independent. The variance of their difference is the sum of their variances:
\begin{equation*}
    Var(\hat{\Delta}_m^{LC}) = \frac{Var(W(Z, t_m, c_m))}{N_m} + \frac{Var(W(Z, t_{m+1}, c_m))}{N_m}
\end{equation*}
Given that $c_m \in [0,1]$, $Y \in \{0,1\}$, and the indicator is in $\{0,1\}$, the random variable $W$ is strictly bounded in the interval $[-1, 1]$. The maximum possible variance for a variable bounded in $[-1, 1]$ is exactly $1$. Substituting $N_m = N/M$, we obtain:
\begin{equation*}
    Var(\hat{\Delta}_m^{LC}) \le \frac{1}{N/M} + \frac{1}{N/M} = \frac{2M}{N}
\end{equation*}
\end{proof}

\begin{lemma}[TCE Partition]
\label{lem:tce_partition}
The True Calibration Error can be decomposed over the $M$ disjoint bins as:
\begin{equation*}
    \mathrm{TCE} = \sum_{m=1}^M \mathbb{E}[|c_f(f(X)) - f(X)| \mid f(X) \in B_m]\mathbb{P}(f(X) \in B_m)
\end{equation*}
\end{lemma}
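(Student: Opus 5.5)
The plan is to apply the law of total expectation, conditioning on which bin $f(X)$ falls into. Here $c_f(p)=\mathbb{E}[Y\mid f(X)=p]$ denotes the true calibration function from Theorem~\ref{th:ece_blind}. With this notation, $\mathrm{TCE}=\mathbb{E}\bigl[|c_f(f(X))-f(X)|\bigr]$, and the integrand $g(X):=|c_f(f(X))-f(X)|$ is a measurable, nonnegative random variable bounded by $1$. In particular it is integrable, so every expectation below is finite.

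The first step is to check that the bins form a partition of the range of $f$. The sets $B_m=[t_m,t_{m+1})$, $m=1,\dots,M$, are pairwise disjoint by construction. Their union is $[0,1)$, or $[0,1]$ once we adopt the convention that the last bin is closed at $t_{M+1}=1$. Since $f(X)\in(0,1)$, almost surely exactly one indicator $\mathbf{1}(f(X)\in B_m)$ equals $1$. This gives the pointwise identity
\begin{equation*}
g(X)=\sum_{m=1}^M g(X)\,\mathbf{1}(f(X)\in B_m).
\end{equation*}

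Next, take expectations and use linearity, which is justified because the sum is finite and each term is integrable. This yields $\mathrm{TCE}=\sum_m \mathbb{E}[g(X)\mathbf{1}(f(X)\in B_m)]$. For every $m$ with $\mathbb{P}(f(X)\in B_m)>0$, the elementary definition of conditional expectation given an event gives
\begin{equation*}
\mathbb{E}[g(X)\mathbf{1}(f(X)\in B_m)]=\mathbb{E}[g(X)\mid f(X)\in B_m]\,\mathbb{P}(f(X)\in B_m).
\end{equation*}
Summing these identities over $m$ produces the claimed decomposition.

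No step here poses a genuine obstacle. The only points requiring care are bookkeeping: the right-endpoint convention, so that the value $1$ is covered, and bins of zero probability. In the zero-probability case the conditional expectation is undefined, but the product is interpreted as $0$, which is consistent with the left-hand term $\mathbb{E}[g\mathbf{1}(f(X)\in B_m)]=0$.
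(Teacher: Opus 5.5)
Your proof is correct and follows the same route as the paper: you rewrite $\mathrm{TCE}$ as $\mathbb{E}[|c_f(f(X))-f(X)|]$ and apply the law of total expectation over the partition $\{B_m\}_{m=1}^M$, carried out explicitly through the indicator identity. Your additional bookkeeping (integrability, zero-probability bins, and the endpoint convention) only makes explicit what the paper's one-line argument leaves implicit; the endpoint convention is not even needed, since $f(X)\in(0,1)\subset\bigcup_m B_m=[0,1)$.
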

\begin{proof}
By definition, $\mathrm{TCE} = \mathbb{E}[|\mathbb{E}[Y|f(X)] - f(X)|]$. Substituting the ideal calibration function $c_f(f(X)) = \mathbb{E}[Y|f(X)]$, we can rewrite this as $\mathrm{TCE} = \mathbb{E}[|c_f(f(X)) - f(X)|]$. Because the disjoint bins $\{B_m\}_{m=1}^M$ form a complete partition of the probability space $[0,1]$, we apply the Law of Total Expectation to condition on the event $f(X) \in B_m$. Summing these conditional expectations yields the stated decomposition.
\end{proof}

\subsection{The 3-Term Bias Decomposition}

We evaluate the macroscopic deviation of the estimator from the True Calibration Error (TCE). To do so, we first introduce the continuous ideal calibration function $c_f(p) = \mathbb{E}[Y|f(X)=p]$ and the theoretical continuous target gap $\Delta_m^* = \mathbb{E}[(f(X) - Y)\mathbf{1}(f(X) \in B_m)]$.

Expanding the expected value of our estimator by adding and subtracting the theoretical anchors $|\Delta_m^{Trap}|$ and $|\Delta_m^*|$, and using Lemma~\ref{lem:tce_partition}, we relate it directly to the True Calibration Error:
\begin{align}
    \mathbb{E}[\widehat{\rm ECE}_{\rm blind}] &= \sum_{m=1}^M \mathbb{E}[|\hat{\Delta}_m^{LC}|] \nonumber \\
    &= \underbrace{\sum_{m=1}^M \mathbb{E}[|c_f(f(X)) - f(X)| \mid f(X) \in B_m]\mathbb{P}(f(X) \in B_m)}_{:= \text{TCE}} \nonumber \\
    &\quad - \underbrace{\sum_{m=1}^M \left( \mathbb{E}[\mathbf{1}(f(X) \in B_m)|c_f(f(X)) - f(X)|] - |\Delta_m^*| \right)}_{:= \mathcal{B}_{bin}} \nonumber \\
    &\quad + \underbrace{\sum_{m=1}^M \left( |\Delta_m^{Trap}| - |\Delta_m^*| \right)}_{:= \mathcal{B}_{trap}} \nonumber \\
    &\quad + \underbrace{\sum_{m=1}^M \left( \mathbb{E}[|\hat{\Delta}_m^{LC}|] - |\Delta_m^{Trap}| \right)}_{:= \mathcal{B}_{stat}}
\end{align}
This rigorously yields the 3-term decomposition of the estimator's expected macroscopic deviation:
\begin{equation}
    \text{Bias}(\widehat{\rm ECE}_{\rm blind}) = \mathbb{E}[\widehat{\rm ECE}_{\rm blind}] - \mathrm{TCE} =  \mathcal{B}_{stat} + \mathcal{B}_{trap} - \mathcal{B}_{bin}
\end{equation}

\textbf{Part 1: Bounding the Statistical Bias ($\mathcal{B}_{stat}$)} \\
With Lemma~\ref{lem:lem1} we have 
\begin{equation}
\label{eq:stat_bias}
    \mathbb{E}[|\hat{\Delta}_m^{LC}|] - |\Delta_m^{Trap}|  = \mathbb{E}[|\hat{\Delta}_m^{LC}|] - |\mathbb{E}[\hat{\Delta}_m^{LC}]|
\end{equation}
We also have, for any random variable $Z$, $\mathbb{E}[|Z|] - |\mathbb{E}[Z]| \le \sqrt{Var(Z)}$.
Applying this directly to Equation~\ref{eq:stat_bias}:
\begin{equation*}
    \mathbb{E}[|\hat{\Delta}_m^{LC}|] - |\Delta_m^{Trap}| \le \sqrt{Var(\hat{\Delta}_m^{LC})}
\end{equation*}
Substituting the upper bound from Lemma 2 ($Var \le 2M/N$) and summing across all $M$ bins yields the statistical bias limit:
\begin{equation}
    \mathcal{B}_{stat} \le \sum_{m=1}^M \sqrt{\frac{2M}{N}} = M\sqrt{\frac{2M}{N}} = \sqrt{2}\frac{M^{3/2}}{N^{1/2}}
\end{equation}

\textbf{Part 2: Bounding the Trapezoidal Bias ($\mathcal{B}_{trap}$)} \\
This bias measures the geometric error introduced by replacing the continuous prediction $f(X)$ with the constant bin midpoint $c_m$. Using the Reverse Triangle Inequality ($|A| - |B| \le |A-B|$) and the property that $|\mathbb{E}[Z]| \le \mathbb{E}[|Z|]$:
\begin{align*}
    |\Delta_m^{Trap}| - |\Delta_m^*| &\le |\Delta_m^{Trap} - \Delta_m^*|\\
    &=|\mathbb{E}[(c_m - Y)\mathbf{1}_{B_m}] - \mathbb{E}[(f(X) - Y)\mathbf{1}_{B_m}]| \\
    &\leq \mathbb{E}[\mathbf{1}(f(X) \in B_m)|c_m - f(X)|]
\end{align*}
Because the prediction $f(X)$ is strictly constrained to the bin $B_m$ (total width $1/M$), and $c_m$ is its exact geometric midpoint, the absolute distance between them can never exceed half the bin's width: $|c_m - f(X)| \le 1/(2M)$. Substituting this absolute geometric limit:
\begin{equation}
    \mathcal{B}_{trap} \le \sum_{m=1}^M \mathbb{E}\left[\frac{1}{2M}\mathbf{1}(f(X) \in B_m)\right] = \sum_{m=1}^M \frac{1}{2M}\mathbf{1}(f(X) \in B_m)  = \frac{1}{2M}
\end{equation}

\textbf{Part 3: Bounding the Binning Bias ($\mathcal{B}_{bin}$)} \\
The binning bias $\mathcal{B}_{bin}$ represents the error caused by dividing the continuous probability space into $M$ discrete bins. This error depends only on the bin width and the true calibration function, making it identical to the discretization bias in standard ECE. 

Assuming the true calibration function is $L$-Lipschitz, the variation within any bin of width $1/M$ is strictly limited. Following the theoretical analysis of binned estimators by \citep{futami2024information} (Theorem 3.) , this bias is bounded by:
\begin{equation}
    \mathcal{B}_{bin} \le \frac{1+L}{2M}
\end{equation}

\subsection{Variance Bound via Efron-Stein Inequality}

\begin{theorem}[Variance Bound of the Estimator]
The variance of the empirical estimator $\widehat{\rm ECE}_{\rm blind}$ evaluated on $N$ samples split into $M$ disjoint subsets is strictly bounded by $8M^2/N$.
\end{theorem}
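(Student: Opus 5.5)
We apply the Efron--Stein inequality to $\widehat{\rm ECE}_{\rm blind}$, viewed as a function $g(Z_1,\dots,Z_N)$ of the $N$ independent samples, with the random partition into $S_1,\dots,S_M$ held fixed (condition on it; the final bound is uniform in the partition, so it survives averaging). Efron--Stein gives
\begin{equation*}
  Var(\widehat{\rm ECE}_{\rm blind}) \le \frac{1}{2}\sum_{i=1}^N \mathbb{E}\left[\left(g(Z_1,\dots,Z_N) - g(Z_1,\dots,Z_i',\dots,Z_N)\right)^2\right],
\end{equation*}
where $Z_i'$ is an independent copy of $Z_i$. So we only need a deterministic bounded-difference constant $\delta_i$ with $|g - g^{(i)}| \le \delta_i$.

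\textbf{Step 1 (locate the influence of one sample).} Fix $i$ and let $S_k$ be its subset. The sample $Z_i$ enters $\hat{\Delta}_m^{LC}$ only through the sums over $S_m$ and $S_{m+1}$. Hence it appears in at most two local gaps: in $\hat{\Delta}_k^{LC}$, through the term $W(Z_i,t_k,c_k)$ with weight $1/N_k$, and in $\hat{\Delta}_{k-1}^{LC}$, through the term $-W(Z_i,t_k,c_{k-1})$ with weight $1/N_{k-1}$. Boundary indices $k=1$ or $k=M$ give at most one such term. All other $\hat{\Delta}_m^{LC}$ are unchanged.

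\textbf{Step 2 (bounded differences).} Since $W\in[-1,1]$, as shown in the variance lemma for $\hat{\Delta}_m^{LC}$, replacing $Z_i$ by $Z_i'$ changes each affected term by at most $2/N_m$. By the reverse triangle inequality $\big||a|-|b|\big|\le|a-b|$, each affected summand $|\hat{\Delta}_m^{LC}|$ also moves by at most $2/N_m$. Summing the at most two affected bins gives $|g-g^{(i)}| \le 4/N_m \approx 4M/N$, using $N_m=\lfloor N/M\rfloor \ge N/(2M)$ if we want to be careful with the floor.

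\textbf{Step 3 (sum).} Plugging $\delta_i^2 \le 16M^2/N^2$ into Efron--Stein gives
\begin{equation*}
  Var \le \tfrac12 \cdot N \cdot \frac{16M^2}{N^2} = \frac{8M^2}{N}.
\end{equation*}
Samples falling outside $\bigcup_m S_m$ because of the floor contribute nothing, so they only help the bound.

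\textbf{Main obstacle.} The delicate point is the bookkeeping in Steps 1--2. A single sample sits in two adjacent gaps with different midpoints $c_k$ and $c_{k-1}$, so the two changes cannot be merged; they must be bounded separately. We also need to keep the constant at $4/N_m$ rather than something looser. Finally, the floor in $N_m$ must be handled: either assume $M\mid N$, matching the paper's substitution $N_m=N/M$ in the variance lemma, or absorb the factor into the constant. The conditioning on the random partition is routine, since the bound does not depend on it.
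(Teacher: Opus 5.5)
Your proposal is correct and is essentially the paper's own argument: Efron--Stein with the deterministic bounded difference $4M/N$. This comes from the fact that each sample enters only two adjacent gaps, and each gap moves by at most $2M/N$ because $W\in[-1,1]$ and by the reverse triangle inequality. Two minor refinements: the exact constant $8$ relies on $N_m=N/M$, as the paper assumes (your fallback $N_m\ge N/(2M)$ would only give $32M^2/N$, whereas summing Efron--Stein over the $MN_m$ samples actually used gives $8M/N_m$ in general), and conditioning on the random partition is justified not merely by the uniformity of the bound but because, for i.i.d.\ data, the conditional law of $\widehat{\rm ECE}_{\rm blind}$ given the partition does not depend on it, so the between-partition term in the law of total variance vanishes.
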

\begin{proof}
Let $\Phi(S) = \sum_{m=1}^M |\hat{\Delta}_m^{LC}|$ be our estimator acting on the dataset $S$. To apply the Efron-Stein inequality \cite{efron1981jackknife}, we evaluate the maximum absolute perturbation $|\Phi(S) - \Phi(S^{(k)})|$ when a single sample $Z_k \in S$ is replaced by an independent copy $Z_k'$.

The modified sample $Z_k$ belongs to exactly one disjoint subset, $S_{m'}$. By definition, this subset is utilized in exactly two local empirical gaps: $\hat{\Delta}_{m'}^{LC}$ and $\hat{\Delta}_{m'-1}^{LC}$. Therefore, replacing $Z_k$ with $Z_k'$ leaves the other $M-2$ gaps perfectly unchanged.

Because the observation function $W(Z, t, c_m) \in [-1, 1]$, the maximum absolute difference caused by replacing one sample is bounded by $|W(Z_k) - W(Z_k')| \le 2$. 
Consequently, the perturbation on the raw gap $\hat{\Delta}_{m'}^{LC}$ is bounded by $2/N_{m'} = 2M/N$. The same bound applies to $\hat{\Delta}_{m'-1}^{LC}$.

Using the reverse triangle inequality ($||a| - |b|| \le |a-b|$), the change in the absolute gaps cannot exceed the change in the raw gaps. Summing these perturbations, the maximum total change to the estimator is:
\begin{equation*}
    |\Phi(S) - \Phi(S^{(k)})| \le \frac{2M}{N} + \frac{2M}{N} = \frac{4M}{N}
\end{equation*}

The Efron-Stein inequality limits the variance of $\Phi(S)$ by half the expected sum of squared perturbations:
\begin{equation*}
    Var(\Phi(S)) \le \frac{1}{2} \sum_{k=1}^N \mathbb{E}\left[ (\Phi(S) - \Phi(S^{(k)}))^2 \right]
\end{equation*}
Substituting our deterministic worst-case bound:
\begin{equation*}
    Var(\Phi(S)) \le \frac{1}{2} \sum_{k=1}^N \left( \frac{4M}{N} \right)^2 = \frac{1}{2} \left( N \cdot \frac{16M^2}{N^2} \right) = \frac{8M^2}{N}
\end{equation*}
This concludes the proof.
\end{proof}

\subsection{Final Proof of Theorem~\ref{th:ece_blind} (Estimator Consistency)}

We now combine the bounds for the variance and the three bias terms to prove Theorem~\ref{th:ece_blind}. 

The Mean Squared Error (MSE) of any estimator is the sum of its squared bias and its variance:
\begin{equation}
    \text{MSE}(\widehat{\rm ECE}_{\rm blind}) = \text{Bias}(\widehat{\rm ECE}_{\rm blind})^2 + Var(\widehat{\rm ECE}_{\rm blind})
\end{equation}

From our 3-term decomposition, the total absolute bias is bounded by the sum of the individual bounds:
\begin{equation}
    |\text{Bias}| \le \mathcal{B}_{stat} + \mathcal{B}_{trap} + \mathcal{B}_{bin} \le \mathcal{O}\left(\frac{M^{3/2}}{N^{1/2}} + \frac{1}{M}\right)
\end{equation}

Squaring this total bias gives:
\begin{equation}
    \text{Bias}^2 \le \mathcal{O}\left(\frac{M^3}{N} + \frac{1}{M^2}\right)
\end{equation}

From the Efron-Stein inequality, we established that the variance is strictly bounded by $\frac{8M^2}{N}$, which is $\mathcal{O}(\frac{M^2}{N})$. Because $\frac{M^3}{N}$ dominates $\frac{M^2}{N}$, the variance term is absorbed into the squared bias bound. This yields the final MSE bound:
\begin{equation}
    \text{MSE}(\widehat{\rm ECE}_{\rm blind}) \le \mathcal{O}\left(\frac{M^3}{N} + \frac{1}{M^2}\right)
\end{equation}

To ensure the estimator is consistent, the MSE must vanish as the dataset size $N$ goes to infinity. If we scale the number of bins $M$ as $M \propto N^\alpha$, we can substitute this into the MSE bound:
\begin{equation}
    \text{MSE} \le \mathcal{O}\left(N^{3\alpha - 1} + N^{-2\alpha}\right)
\end{equation}

For both terms to approach zero as $N \to \infty$, their exponents must be strictly negative. This requires:
\begin{enumerate}
    \item $3\alpha - 1 < 0 \implies \alpha < \frac{1}{3}$
    \item $-2\alpha < 0 \implies \alpha > 0$
\end{enumerate}

Therefore, for any scaling exponent $0 < \alpha < \frac{1}{3}$, the MSE strictly vanishes:
\begin{equation}
    \lim_{N \to \infty} \mathbb{E}\left[\left(\widehat{\rm ECE}_{\rm blind} - \text{TCE}\right)^2\right] = 0
\end{equation}
This concludes the proof.

\textbf{Discussion on the Lipschitz Assumption.} Theorem\ref{th:ece_blind} assumes the true calibration function $c_f(p) = \mathbb{E}[Y|f(X)=p]$ is $L$-Lipschitz. While the underlying neural network $f(X)$ mapping inputs to probabilities is typically Lipschitz continuous, this does not guarantee that $c_f(p)$ is Lipschitz. The calibration curve depends on the conditional density of the dataset, meaning sharp transitions in the data distribution could create discontinuities in $c_f(p)$. Nevertheless, assuming a Lipschitz continuous true calibration function is a standard premise in theoretical calibration literature \cite{futami2024information, popordanoska2022consistent} to derive finite-sample bounds.

\section{Derivation of the Optimal Bin Scaling}
\label{app:optimal_bins}

From Theorem~\ref{th:ece_blind}, the upper bound on the Mean Squared Error (MSE) of the $\widehat{\rm ECE}_{\rm blind}$ estimator is given by:
\begin{equation}
    \text{MSE} \le \mathcal{O}\left( \frac{M^3}{N} + \frac{1}{M^2} \right)
\end{equation}
To determine the optimal scaling of the number of bins $M$ with respect to the dataset size $N$ that minimizes this macroscopic error, we differentiate the upper bound with respect to $M$ and set the derivative to zero:
\begin{equation}
    \frac{\partial}{\partial M} \left( \frac{M^3}{N} + \frac{1}{M^2} \right) = \frac{3M^2}{N} - \frac{2}{M^3} = 0
\end{equation}
Solving for $M$, we obtain:
\begin{equation}
    \frac{3M^2}{N} = \frac{2}{M^3} \implies M^5 = \frac{2N}{3} \implies M \propto N^{1/5}
\end{equation}
Thus, to asymptotically minimize the estimation error while balancing statistical subset noise against discretization bias, the optimal number of bins should scale as $N^{1/5}$.

\textbf{Bound Tightness.} Because our derivation establishes an upper bound on the estimation error, we lack a theoretical guarantee on its tightness. If the bound is loose, the derived optimal bin scaling ($M \propto N^{1/5}$) may be suboptimal in practice, and a different scaling exponent might yield lower empirical errors. However, this lack of tightness does not impact the asymptotic consistency of the estimator; any scaling exponent $0 < \alpha < 1/3$ guarantees that the MSE strictly vanishes as $N \to \infty$.

\newpage
\section{Methodology and Baseline Implementations}
\label{app:methodology}

\subsection{Standard ECE Formulation and the Optimal Number of Bins}
For a binary classification task over a dataset of $N$ samples, the standard empirical binned Expected Calibration Error (ECE) is computed by partitioning the continuous probability space $[0,1]$ into $M$ disjoint bins, denoted $B_1, \dots, B_M$. Let $N_m = |B_m|$ be the number of samples whose predicted positive class probability $f(X_i)$ falls into the $m$-th bin. The estimator is defined as:
\begin{equation}
    \widehat{\rm ECE}_{\rm bin} = \sum_{m=1}^M \frac{N_m}{N} \left| \text{acc}(B_m) - \text{conf}(B_m) \right|
\end{equation}
where $\text{acc}(B_m) = \frac{1}{N_m} \sum_{i \in B_m} Y_i$ is the empirical accuracy, and $\text{conf}(B_m) = \frac{1}{N_m} \sum_{i \in B_m} f(X_i)$ is the average model confidence within the bin. 

The selection of the bin count $M$ introduces a fundamental bias-variance trade-off: a small $M$ obscures local calibration errors (high binning bias), while a large $M$ leaves bins sparsely populated (high statistical variance). Recent information-theoretic analyses \cite{futami2024information} have formalized this estimation bias, deriving the optimal number of bins to minimize the error of the binned estimator. In our experiments, we scale $M \propto N^{1/3}$ for the $\widehat{\rm ECE}_{\rm bin}$ oracle exactly as derived by \cite{futami2024information}. This results in $M=15$ for BoolQ ($N=3,270$) and $M=38$ for MMLU ($N=56,168$).

\subsection{Empirical Implementations and the Restricted Target Space}

\textbf{Prompting Strategy for Binary Tasks.} To evaluate the models, we format all dataset queries as strict binary classification tasks. For example, a sample from the BoolQ dataset is formulated as: 
\begin{quote}
\textit{``Passage: All biomass goes through at least some of these steps: it needs to be grown, collected... [\dots] \\ Question: does ethanol take more energy to make than it produces? \\ Answer True or False.''}
\end{quote}
Similar binary constraints are applied to other datasets like MMLU (e.g., \textit{``Reply only with Yes or No.''}).

\textbf{The Restricted Target Space.} When evaluating zero-shot LLMs via APIs, standard evaluation risks ``vocabulary bleeding,'' where the model distributes probability mass across synonymous tokens (e.g., ``Yes'', `` Correct'') instead of the strict binary targets (e.g., ``True'' vs. ``False''). To ensure our baselines fail strictly due to query constraints rather than prompt formatting noise, we clamp the decision space for all logit-based estimators (Oracle, Ours, Sampling, and Iterative Extraction). To account for tokenizer fragmentation (e.g., \texttt{"True"}, \texttt{" true"}, \texttt{"TRUE"}), we define $V^+$ as the set of all valid token variations for the positive answer and $V^-$ for the negative answer. We apply a large positive constant bias ($C=50$) to all tokens in $V^+ \cup V^-$, effectively suppressing all irrelevant vocabulary to zero.

\vspace{2mm}
\noindent \textbf{1. The Oracle (White-Box Ground Truth).} 
The true continuous confidence $f(X_i)$ is computed directly from the model's native hidden logits. We extract the exact pre-softmax logit $z_v$ for every target token in $V^+ \cup V^-$. The continuous probability is calculated using a restricted softmax over these sets:
\begin{equation*}
    f(X_i) = \frac{\sum_{v \in V^+} \exp(z_v)}{\sum_{v \in V^+} \exp(z_v) + \sum_{v \in V^-} \exp(z_v)}
\end{equation*}
This exact probability is passed to the standard $\widehat{\rm ECE}_{\rm bin}$ equation.

\vspace{1mm}
\noindent \textbf{2. $\widehat{\rm ECE}_{\rm blind}$ (Ours, $1$ Query).}
As defined in Section~\ref{sec:ece_blind}, the dataset is partitioned into $M$ disjoint subsets. For a given sample $X_i \in S_m$ evaluated against threshold $t_m$, we shift the model's decision boundary by computing $b_m = -\ln(t_m / (1-t_m))$. 

As described above, we apply the large constant bias $C = 50$ to all tokens in $V^+ \cup V^-$ to prevent vocabulary bleeding. To evaluate the threshold, we apply the additional shift $b_m$ exclusively to the positive tokens in $V^+$. The modified logits $\tilde{z}_v$ sent to the API are therefore:
\begin{equation*}
    \tilde{z}_v = \begin{cases} 
    z_v + C + b_m & \text{if } v \in V^+ \\ 
    z_v + C & \text{if } v \in V^- \\ 
    z_v & \text{otherwise} 
    \end{cases}
\end{equation*}

Because $C$ is sufficiently large, the probability of generating any token outside of $V^+ \cup V^-$ becomes negligible. The modified probability of outputting a positive token is:
\begin{equation*}
    \tilde{P}(\text{pos} \mid X_i) = \frac{\sum_{v \in V^+} \exp(z_v + C + b_m)}{\sum_{v \in V^+} \exp(z_v + C + b_m) + \sum_{v \in V^-} \exp(z_v + C)}
\end{equation*}

Notice that $\exp(C)$ factors out of both the numerator and the denominator, perfectly preserving the relative probabilities between the sets $V^+$ and $V^-$ while restricting the vocabulary. Factoring out $\exp(C)$ leaves:
\begin{equation*}
    \tilde{P}(\text{pos} \mid X_i) = \frac{e^{b_m} \sum_{v \in V^+} \exp(z_v)}{e^{b_m} \sum_{v \in V^+} \exp(z_v) + \sum_{v \in V^-} \exp(z_v)}
\end{equation*}

When querying the API using greedy decoding ($T=0$), the model outputs a token from $V^+$ if and only if $\tilde{P}(\text{pos} \mid X_i) \ge 0.5$. Substituting our shift $b_m$ and the original continuous probability $p = f(X_i)$, this condition simplifies exactly to $p \ge t_m$. Thus, observing any positive token evaluates the indicator $\mathbf{1}(f(X_i) \ge t_m) = 1$ in a single query.

\vspace{1mm}
\noindent \textbf{3. Verbalized Confidence ($1$ Query).}
The model is prompted to explicitly verbalize its certainty (e.g., \textit{``Answer True or False, and state your confidence as a percentage between 50 and 100''}). We parse the output text to extract the stated probability $\hat{p}_i$. Because this relies on free-text generation, the $C=50$ target bias cannot be applied.

\vspace{1mm}
\noindent \textbf{4. Sampling / Self-Consistency ($K$ Queries).}
We query the API $K$ times per sample at temperature $T=1.0$. To strictly evaluate the mathematical variance of sampling (rather than formatting failures), we apply the restricted space bias $C=50$ to the sets $V^+$ and $V^-$. The estimated probability $\hat{p}_i$ is the empirical frequency of the positive tokens across the $K$ generations. 

\vspace{1mm}
\noindent \textbf{5. Iterative Logit Extraction \citep{carlini2024stealing} ($K$ Queries).}
Given a strict budget of $K$ queries per sample, we perform a binary search over the threshold shift $b \in [-B, +B]$ (with maximum logit gap bounds $B=15$) to find the decision boundary $b^*$ where the model's argmax output flips from a positive token in $V^+$ to a negative token in $V^-$. We evaluate each step at $T=0$ applying the shift $b$ to the set $V^+$ exactly as described for our method. At the boundary, the shifted probability mass is perfectly balanced, allowing us to estimate the continuous probability as $\hat{p}_i = \sigma(-b^*)$.

\section{Extended Results and Breakdown}

Table~\ref{tab:calibration_results} details the Expected Calibration Error (ECE) estimates for every method across all evaluated models and datasets. To facilitate readability, all ECE values are rounded to three decimal places.

\textbf{Detailed Observations.} The breakdown reveals several key insights regarding the behavior of the estimators across different architectures and tasks:
\begin{itemize}
    \item \textbf{Exceptional Accuracy on Specific Pairs:} Despite operating under a strict 1-query budget, $\widehat{\rm ECE}_{\rm blind}$ is remarkably precise on specific configurations. It nearly perfectly matches the Oracle for Llama-3.1-8B on both datasets (e.g., $0.085$ vs $0.084$ on BoolQ). It also shows outstanding accuracy on the MMLU dataset for both Mistral-7B ($0.340$ vs $0.339$) and Gemma-2-9B ($0.314$ vs $0.313$).
    \item \textbf{Failure of Generative Proxies:} Verbalized confidence is highly erratic and often entirely disconnected from the model's true calibration (e.g., overestimating Llama's MMLU error by nearly $3\times$).
    \item \textbf{Inefficiency of Sampling:} Self-consistency sampling slowly converges, but it systematically overestimates the calibration error. Even at $K=8$, it remains less accurate than our 1-query estimator on almost every model-dataset pair.
    \item \textbf{Convergence of Iterative Extraction:} The Carlini et al. binary search approach is highly inaccurate at low budgets ($K \le 3$) because the search space is unresolved. It typically requires 4 to 5 queries to match the precision that $\widehat{\rm ECE}_{\rm blind}$ achieves in a single query, before finally converging to the Oracle at $K=8$.
\end{itemize}

\begin{table*}[!htbp]
\centering
\resizebox{\textwidth}{!}{
\begin{tabular}{lcccccccc}
\toprule
\multirow{2}{*}{Estimator} & \multicolumn{2}{c}{Qwen2.5-7B} & \multicolumn{2}{c}{Llama-3.1-8B} & \multicolumn{2}{c}{Mistral-7B} & \multicolumn{2}{c}{Gemma-2-9B} \\
\cmidrule(lr){2-3} \cmidrule(lr){4-5} \cmidrule(lr){6-7} \cmidrule(lr){8-9}
 & Boolq & Mmlu & Boolq & Mmlu & Boolq & Mmlu & Boolq & Mmlu \\
\midrule
Oracle & 0.140 & 0.156 & 0.084 & 0.070 & 0.095 & 0.339 & 0.245 & 0.313 \\
ECE$^{\text{blind}}$ (Ours, $K=1$) & 0.111 & 0.133 & \underline{0.085} & 0.074 & 0.074 & \underline{0.340} & 0.223 & \underline{0.314} \\
\midrule
Verbalized ($K=1$) & 0.165 & 0.064 & 0.118 & 0.198 & 0.154 & 0.279 & 0.086 & 0.275 \\
\midrule
Sampling ($K=1$) & 0.160 & 0.237 & 0.332 & 0.356 & 0.201 & 0.459 & 0.317 & 0.435 \\
Sampling ($K=2$) & 0.150 & 0.204 & 0.179 & 0.269 & 0.152 & 0.403 & 0.257 & 0.357 \\
Sampling ($K=3$) & 0.145 & 0.185 & 0.117 & 0.200 & 0.133 & 0.382 & 0.246 & 0.336 \\
Sampling ($K=4$) & 0.143 & 0.175 & 0.101 & 0.164 & 0.119 & 0.370 & 0.245 & 0.329 \\
Sampling ($K=5$) & 0.145 & 0.173 & 0.087 & 0.158 & 0.113 & 0.361 & 0.244 & 0.323 \\
Sampling ($K=6$) & 0.141 & 0.170 & 0.079 & 0.146 & 0.106 & 0.358 & 0.245 & 0.321 \\
Sampling ($K=7$) & 0.143 & 0.167 & 0.069 & 0.136 & 0.104 & 0.355 & 0.243 & 0.318 \\
Sampling ($K=8$) & 0.143 & 0.166 & 0.067 & 0.129 & 0.105 & 0.353 & 0.243 & 0.317 \\
\midrule
Carlini et al. ($K=1$) & 0.156 & 0.220 & 0.194 & 0.272 & 0.170 & 0.449 & 0.325 & 0.416 \\
Carlini et al. ($K=2$) & 0.150 & 0.205 & 0.172 & 0.249 & 0.148 & 0.427 & 0.303 & 0.393 \\
Carlini et al. ($K=3$) & 0.140 & 0.164 & 0.062 & 0.141 & 0.101 & 0.338 & 0.222 & 0.332 \\
Carlini et al. ($K=4$) & 0.138 & 0.147 & 0.063 & 0.051 & 0.088 & 0.335 & 0.237 & 0.303 \\
Carlini et al. ($K=5$) & 0.139 & 0.156 & 0.073 & 0.080 & 0.094 & 0.336 & 0.242 & 0.308 \\
Carlini et al. ($K=6$) & 0.140 & 0.155 & \textbf{0.085} & 0.072 & 0.094 & 0.338 & 0.243 & 0.311 \\
Carlini et al. ($K=7$) & \textbf{0.140} & \underline{0.156} & \textbf{0.085} & \textbf{0.071} & \textbf{0.094} & \textbf{0.338} & \underline{0.244} & \underline{0.312} \\
Carlini et al. ($K=8$) & \textbf{0.140} & \textbf{0.156} & 0.084 & \textbf{0.071} & \textbf{0.094} & \textbf{0.339} & \textbf{0.245} & \textbf{0.313} \\
\bottomrule
\end{tabular}
}
\vspace{2mm}
\caption{\textbf{Detailed Calibration Evaluation (ECE) across Models and Datasets.} All ECE values are derived using 5 uniform bins and are rounded to three decimal places. For each column, the estimator closest to the Oracle ECE is in \textbf{bold}, and the second closest is \underline{underlined}.}
\label{tab:calibration_results}
\end{table*}

\newpage
\section{Extended Blind Calibration Curve and Interpretation}
\label{app:contribution_curves}

In standard white-box ECE evaluation, researchers typically use reliability diagrams. These diagrams plot the model's predicted confidence against its empirical accuracy across discrete bins. However, standard reliability diagrams can be visually misleading: a massive gap between accuracy and confidence in a specific bin only impacts the final metric if a large percentage of the dataset actually falls into that bin. To calculate the final ECE, one must manually reweight the visible gap in each bin by its underlying (and often visually hidden) sample mass. 

Because we operate in a strict black-box setting ($K=1$), we cannot extract the exact probability $p_i$ of each sample, making it impossible to construct traditional confidence bins or calculate their exact mass. Instead, the $\widehat{\rm ECE}_{\rm blind}$ estimator naturally bypasses this requirement by sweeping a decision boundary across the dataset and recording the local empirical gap, $\hat{\Delta}_m^{LC}$. 

Plotting $\hat{\Delta}_m^{LC}$ across the confidence regions yields the \textit{Blind Calibration Curve}. To demonstrate its utility, Figures~\ref{fig:boolq_curves} and \ref{fig:mmlu_curves} provide a side-by-side comparison of standard reliability diagrams (computed via the white-box oracle) and our Blind Calibration Curves for all models on the BoolQ and MMLU datasets, respectively. Comparing these side-by-side highlights two key advantages of our visualization:

\begin{enumerate}
    \item \textbf{It shows Net Error (Density-Weighted):} While a standard reliability diagram might show a severe accuracy drop in an edge bin, our contribution curve correctly scales this by the bin's mass. The height of each bar in our curve represents the actual, final penalty that the specific region contributes to the total ECE. The absolute sum of these bars perfectly equals the final $\widehat{\rm ECE}_{\rm blind}$ score.
    \item \textbf{It shows Error Direction (Signed):}
    \begin{itemize}
        \item \textbf{Positive Values (Overconfidence):} The model crosses the threshold and predicts the positive class more frequently than the ground-truth dataset labels justify.
        \item \textbf{Negative Values (Underconfidence):} The model hesitates to cross the threshold, predicting the positive class less frequently than it actually occurs in the dataset.
    \end{itemize}
\end{enumerate}

\begin{figure}[htbp]
    \centering
    % Placeholder for the BoolQ grid (4 models x 2 plots = 8 subplots)
    \includegraphics[width=\textwidth]{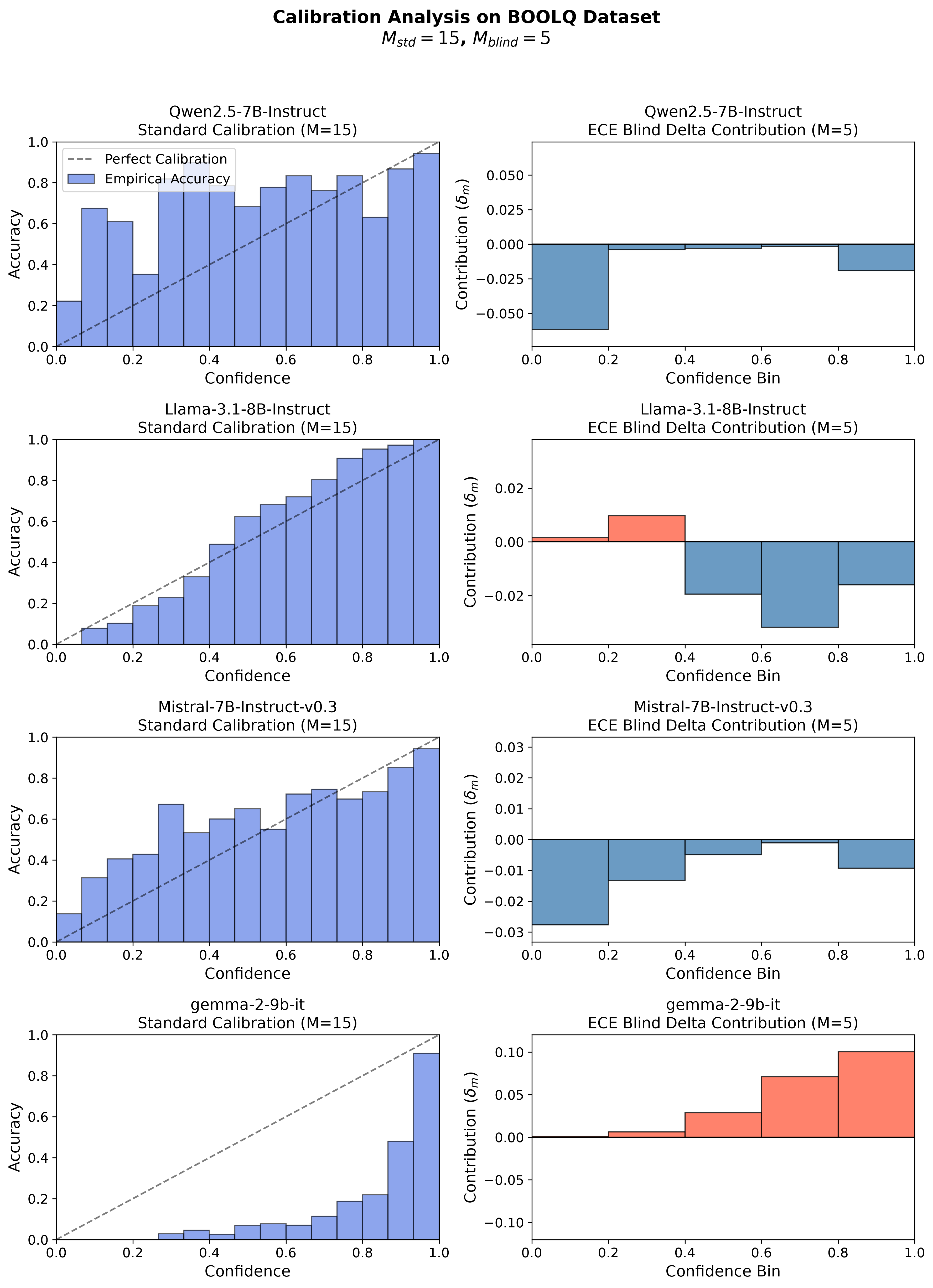} % Replace with your BoolQ grid image
    \caption{\textbf{Calibration Comparison on BoolQ.} For each of the four models, the left plot shows the standard white-box reliability diagram (Accuracy vs. Confidence), and the right plot shows our 1-query Blind Calibration Curve ($\hat{\Delta}_m^{LC}$). Notice how visually large gaps in sparse bins on the standard curve translate to negligible penalties in our density-weighted contribution curve.}
    \label{fig:boolq_curves}
\end{figure}

\begin{figure}[htbp]
    \centering
    % Placeholder for the MMLU grid (4 models x 2 plots = 8 subplots)
    \includegraphics[width=\textwidth]{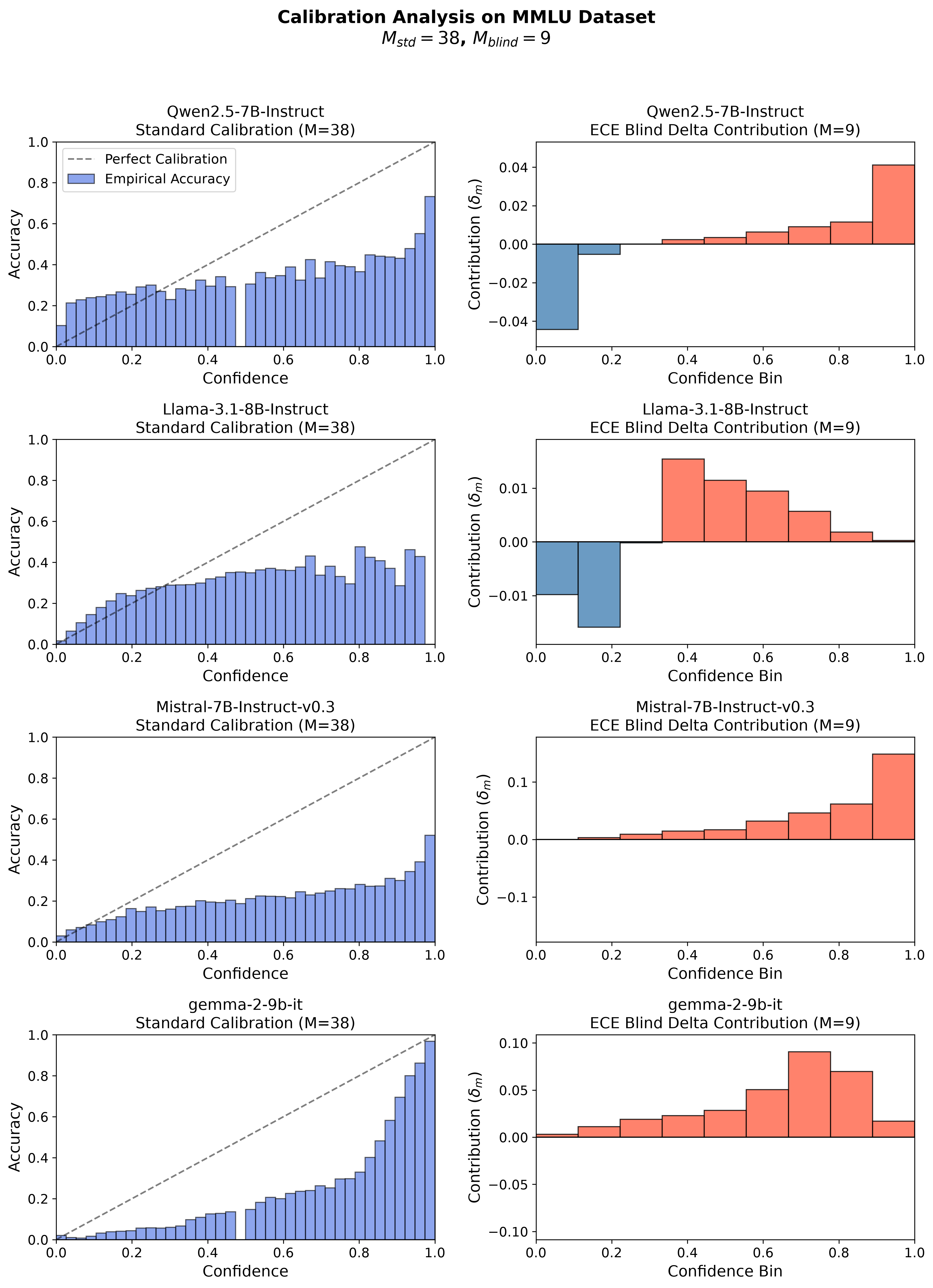} % Replace with your MMLU grid image
    \caption{\textbf{Calibration Comparison on MMLU.} For each of the four models, the left plot shows the standard white-box reliability diagram, and the right plot shows our 1-query Blind Calibration Curve. Positive values indicate overconfidence, while negative values indicate underconfidence.}
    \label{fig:mmlu_curves}
\end{figure}

\end{document}